\documentclass{article} 
\usepackage{iclr2025_conference,times}


\usepackage{amsmath,amsfonts,bm}









\def\eqref#1{equation~\ref{#1}}









\def\1{\bm{1}}










\DeclareMathAlphabet{\mathsfit}{\encodingdefault}{\sfdefault}{m}{sl}
\SetMathAlphabet{\mathsfit}{bold}{\encodingdefault}{\sfdefault}{bx}{n}













\usepackage {amsmath,amsfonts,bm}

\usepackage{hyperref}
\usepackage{url}
\usepackage{graphicx}
\usepackage{booktabs}
\usepackage{wrapfig}
\usepackage{amsthm}
\newtheorem{assumption}{Assumption}
\newtheorem{lemma}{Lemma}
\newtheorem{theorem}{Theorem}

\usepackage{color}

\usepackage{multirow}  
\usepackage{graphicx}  
\usepackage{booktabs}  

\usepackage{adjustbox}

\usepackage{duckuments}
\title{Adaptive Masking Enhances Visual Grounding}

\makeatletter
\def\thanks#1{\protected@xdef\@thanks{\@thanks
        \protect\footnotetext{#1}}}
\makeatother
\author{Sen Jia\textsuperscript{\rm 1, *}, Lei Li\textsuperscript{\rm 2, 3, *, \dag}\thanks{\(^{*}\) Equal contribution. \quad \(^{\dag}\) Corresponding Author.} \\
\textsuperscript{\rm 1}Shandong University \quad \textsuperscript{\rm 2}University of Copenhagen \quad \textsuperscript{\rm 3}University of Washington\\
\texttt{jias@mail.sdu.edu.cn}, \quad \texttt{lilei@di.ku.dk}
}

%

\iclrfinalcopy 

\begin{document}

\maketitle

\begin{figure*}[ht]
   \centering
   \includegraphics[width=0.96\linewidth]{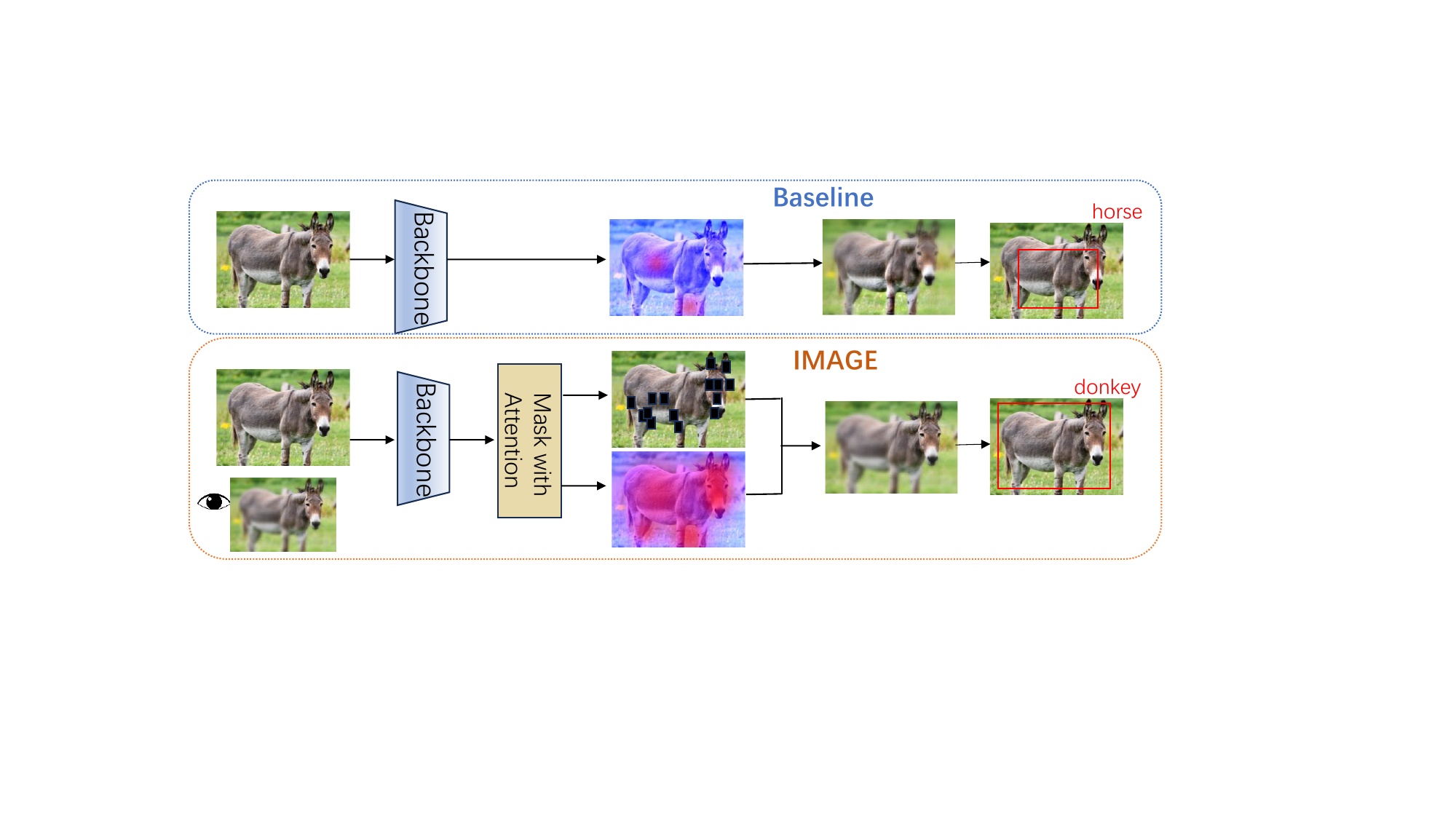}
   \caption{Our IMAGE method is inspired by human perception; by masking key details of objects, we encourage the model to learn more robust representations.}
   \label{fig:}
 \end{figure*}

\begin{abstract}

In recent years, zero-shot and few-shot learning in visual grounding have garnered considerable attention, largely due to the success of large-scale vision-language pre-training on expansive datasets such as LAION-5B and DataComp-1B. However, the continuous expansion of these datasets presents significant challenges, particularly with respect to data availability and computational overhead, thus creating a bottleneck in the advancement of low-shot learning capabilities. In this paper, we propose \textbf{IMAGE}, \textbf{I}nterpretative \textbf{MA}sking with \textbf{G}aussian Radiation Mod\textbf{E}ling, aimed at enhancing vocabulary grounding in low-shot learning scenarios without necessitating an increase in dataset size. Drawing inspiration from cognitive science and the recent success of masked autoencoders (MAE), our method leverages adaptive masking on salient regions of the feature maps generated by the vision backbone. This enables the model to learn robust, generalized representations through the reconstruction of occluded information, thereby facilitating effective attention to both local and global features. We evaluate the efficacy of our approach on benchmark datasets, including COCO and ODinW, demonstrating its superior performance in zero-shot and few-shot tasks. Experimental results consistently show that IMAGE outperforms baseline models, achieving enhanced generalization and improved performance in low-shot scenarios. These findings highlight the potential of adaptive feature manipulation through attention mechanisms and Gaussian modeling as a promising alternative to approaches that rely on the continual scaling of dataset sizes for the advancement of zero-shot and few-shot learning. Our code is publicly available at \href{https://github.com/git-lenny/IMAGE}{https://github.com/git-lenny/IMAGE}.

    

 
\end{abstract}


\section{Introduction}
\textit{"To see the world in a grain of sand,"} -- William Blake, \textit{Auguries of Innocence}
\bigskip

When observing an object, humans naturally focus on key details to grasp its essence. Masking these key features in visual tasks may encourage models to learn more robust representations, potentially enhancing performance. Low-shot object grounding has gained significant attention due to its ability to reduce reliance on large labeled datasets. The capacity to ground and recognize novel objects with minimal examples is particularly valuable in applications like autonomous driving, where systems must handle rare or unseen situations with limited data~\cite{rezaei2020zero}. Additionally, low-shot grounding aids embodied AI in associating new concepts or objects within interactive environments with few labeled examples~\cite{varley2024embodied}. Recent vision-language models, such as CLIP~\cite{radford2021learning}, have achieved notable success in bridging visual and textual modalities by leveraging large-scale pre-training. However, despite their strong performance, these models remain data-hungry, requiring substantial labeled data to adapt to new scenes. This reliance limits their utility in scenarios where data collection is challenging or impractical.

In visual grounding, recent efforts to enhance open-vocabulary detection have integrated textual prompts and multimodal fusion into object detection frameworks. Models like GLIP~\cite{li2022grounded}, YOLO-world~\cite{cheng2024yolo}, and Grounding DINO~\cite{liu2023grounding} extend traditional detectors by incorporating language understanding, enabling object detection based on textual descriptions. While these approaches have advanced zero-shot grounding, they still demand extensive data to perform effectively. Furthermore, these models often struggle in complex scenes where visual cues are occluded or misaligned with textual descriptions.

These limitations highlight a critical issue: current multimodal models struggle to generalize from seen to unseen categories without explicit training examples. This challenge is compounded by their reliance on static visual cues and the lack of dynamic reasoning, as existing methods prioritize dataset expansion over teaching models to effectively "interpret" images. There are some methods such as Masked Autoencoder (MAE)~\cite{he2022masked} and FLIP~\cite{li2023scaling} attempt to improve the performance of a model by reconstructing the masked portion of the input data. However, this randomized masking approach suffers from poor interpretability, determinism and effect enhancement.

To address these limitations in a better way, we propose IMAGE, a novel method that introduces an adaptive masking strategy on features within the framework. Inspired by the human ability to infer missing information and focus attention dynamically, IMAGE mirrors cognitive processes in human reasoning. By deploying an adaptive mask scheme, IMAGE enables the model to learn more robust representations and focus on discriminative features.(eg. it makes more sense to identify cats by focusing on silhouette features rather than colors). In a word, IMAGE allows the model to learn how to “heed” objects rather than mechanically scanning and recognizing them.

We validate our method on datasets such as COCO~\cite{lin2014microsoft} and ODinW, and test it in both zero-Shot and few-Shot situations. Utilizing IMAGE's adaptive masking strategy, we achieve measurable improvements in both few-shot and zero-shot detection accuracy without significant computational overhead. Extrinsically, our method reduces the dependence on ever-larger datasets. Intrinsically, it provides a theoretical based way to empower existing detection models with robust learning and reasoning abilities. Our contributions are as follows:
\begin{itemize}
\item We introduce \textbf{IMAGE}, a novel adaptive masking framework that enhances low-shot visual grounding by enabling models to focus on important object features and improve reasoning capabilities, leading to more robust representations.
\item We demonstrate theoretically and empirically that adaptive masking improves model robustness and generalization to unseen datasets, effectively addressing fundamental challenges in zero-shot and few-shot learning without relying on larger datasets.

\item We provide empirical evidence on standard benchmarks showing that \textbf{IMAGE} outperforms baseline models and random masking strategies in low-shot settings, enhancing both few-shot and zero-shot performance with minimal computational overhead.

\end{itemize}.

\section{Related Work}

\paragraph{Zero-Shot and Few-Shot Learning in Visual Grounding} Low-shot learning, especially Zero-shot learning (ZSL), aims to recognize objects from unseen classes by leveraging knowledge transfer from seen classes~\cite{lampert2009learning,farhadi2009describing,socher2013zero}. Early approaches in ZSL for visual grounding focused on attribute-based methods and semantic embeddings to relate seen and unseen classes~\cite{akata2015evaluation,xian2018zero}. With the advent of large-scale vision-language models like CLIP~\cite{radford2021learning}, recent works have utilized these pretrained models for zero-shot grounding tasks~\cite{gu2021open,li2022grounded}. However, these methods often rely on extensive datasets for pre-training and fine-tuning, limiting their scalability and practicality. Few-shot learning, on the other hand, seeks to learn new concepts from a small number of labeled examples~\cite{fei2006one,snell2017prototypical}. In visual grounding, few-shot learning approaches have been developed to enhance generalization to new classes with limited annotated data~\cite{kang2019few,sun2021fsce}. Despite progress, many few-shot methods struggle with overfitting due to data scarcity and often require complex meta-learning frameworks~\cite{finn2017model,li2019episodic}. 


\paragraph{Attention Mechanisms and Masking Strategies in Vision Models}
Attention mechanisms have become integral in deep learning models for their ability to focus on relevant parts of the input data~\cite{bahdanau2015neural,vaswani2017attention}. In vision transformers, self-attention enables the modeling of global dependencies, enhancing feature representations~\cite{dosovitskiy2021an,liu2021swin}. In the self-supervised learning area, masking parts of the input data has been an effective technique to improve feature representations. Methods like Masked Autoencoders (MAE)~\cite{he2022masked} mask random patches of the input image and train the model to reconstruct them. BEiT~\cite{bao2021beit} extends this idea by using a tokenizer to create discrete tokens for masked patch prediction. However, these methods typically use random masking, which does not guide the model to focus on important features. 

\paragraph{Radiance Field Modeling and Gaussian Approaches}
Radiance fields have been employed in computer vision and graphics to model the way light interacts with surfaces, enabling high-fidelity scene reconstruction~\cite{mildenhall2020nerf,niemeyer2020differentiable}. Neural Radiance Fields (NeRF)~\cite{mildenhall2020nerf} represent scenes using continuous volumetric radiance fields parameterized by neural networks. Gaussian modeling of radiance fields allows for smooth representations and has been utilized in various applications~\cite{wang2021nerf,kim2022gaussian}. Similarly, Zhou et al.~\cite{zhou2016learning} demonstrated that global average pooling enables CNNs to localize discriminative regions without explicit localization training. In our work, we employ a dynamic Gaussian modeling approach to represent the importance prior distribution of the feature map. This approach allows us to flexibly apply an adaptive mask to the feature map, instead of using rigid thresholding, thereby enhancing the model's focus on salient regions.

\section{Method}

\begin{figure*}[ht]
   \centering
   \includegraphics[width=\linewidth]{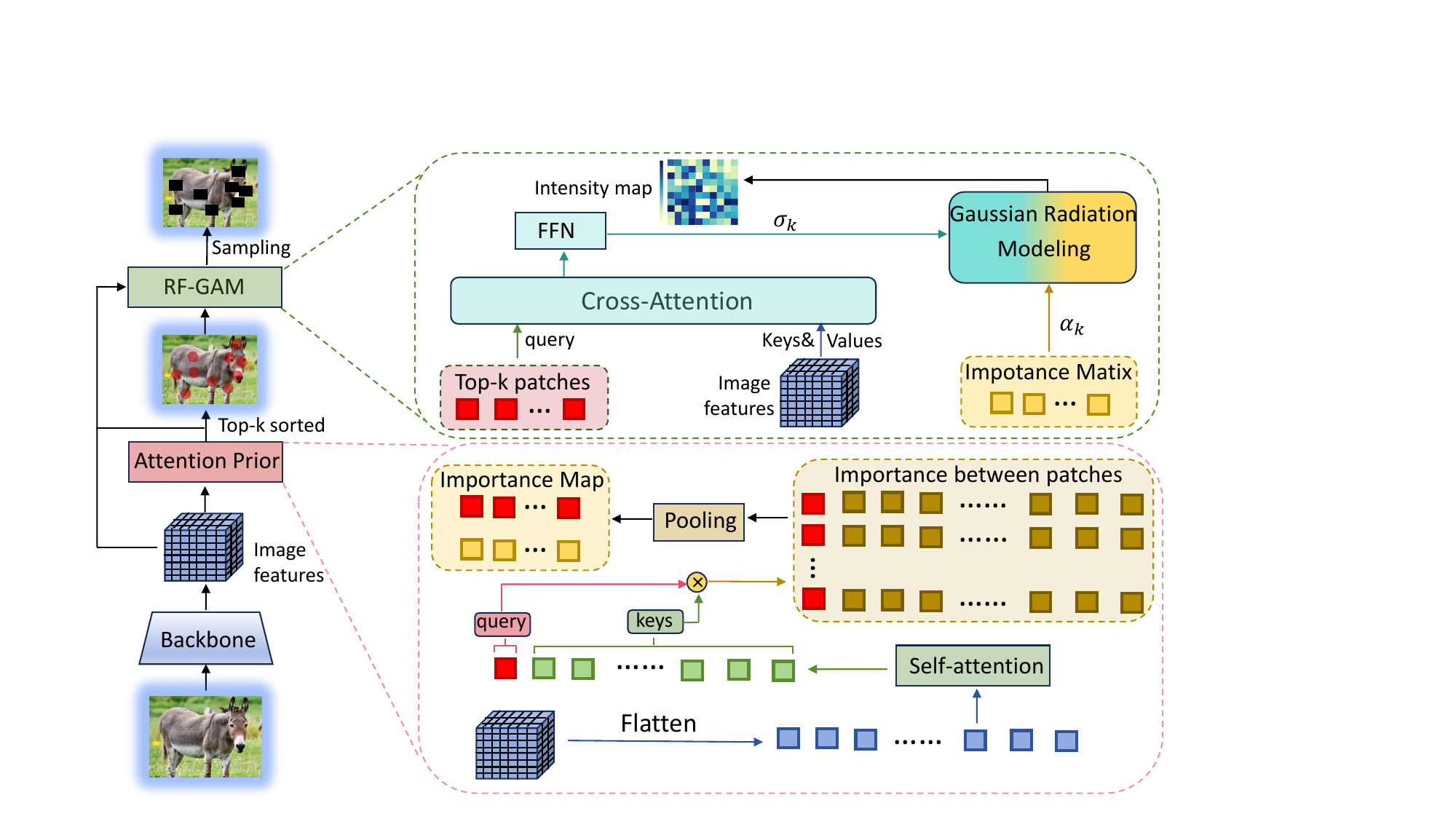} 
   \caption{Pipeline of IMAGE model, consisting of two blocks: attention prior generation module and RF-GAM mask generation module.}
   \label{fig:pipeline}
\end{figure*}

\textbf{IMAGE} aims to enhance zero-shot and few-shot visual grounding without relying on large-scale datasets. Inspired by Masked Autoencoders (MAE), IMAGE leverages adaptive masking techniques that emphasize salient regions within an image's feature map, compelling the model to infer missing information and learn robust, generalized representations. As is shown in Fig.~\ref{fig:pipeline}. IMAGE consists of two primary components: the \textbf{Importance Prior Generation Block} ($\theta_p$), which estimates the importance of image patches based on their relationships within the feature map, and the \textbf{Adaptive Mask Generation Block} ($\theta_m$), which creates adaptive masks guided by the importance prior to direct the model’s attention during training. Given an input image, a pretrained Swin-Transformer backbone network processes it to produce hierarchical feature maps at multiple scales, denoted as $\{F_1, F_2, F_3, F_4\}$, where each feature map $F_i$ has dimensions $(B, C_i, H_i, W_i)$, representing batch size $B$, number of channels $C_i$, and spatial dimensions $H_i \times W_i$ at scale $i$. IMAGE applies adaptive masking on these feature maps, focusing the model's attention on the most relevant regions, thereby improving its reasoning capabilities and generalization performance.

\subsection{Importance Prior Generation}

The first step in adaptive masking is to compute an \textit{importance prior} that captures the relevance of each patch within a feature map. For each feature map $F_i$, we perform the following steps:

\paragraph{Self-Attention Encoding}

We reshape $F_i$ into a sequence of tokens and apply a self-attention mechanism to capture contextual relationships between image patches:
\begin{align*}
X_i &= \text{Reshape}(F_i) \in \mathbb{R}^{B \times N_i \times C_i}, \\ 
Z_i &= X_i + \text{SelfAttention}(X_i), \\
T_i &= Z_i + \text{FFN}(Z_i),
\end{align*}
where $N_i = H_i \times W_i$ is the number of patches at scale $i$, and $\text{FFN}$ denotes a feedforward network.

\paragraph{Importance Prior Calculation}

To compute the importance of each patch, we calculate the correlation between each patch $p_j$ and all other patches in $T_i$:
\[
S_j = p_j \times T_i^\top,
\]
where $p_j \in \mathbb{R}^{B \times 1 \times C_i}$ is the feature vector of the $j$-th patch. We then average $S_j$ over all patches to obtain the importance score for patch $p_j$:
\[
s_j = \text{AveragePooling}(S_j).
\]
Repeating this for all patches yields the importance prior matrix $S_{\text{whole}} \in \mathbb{R}^{B \times N_i \times 1}$. We normalize the importance scores to ensure comparability:
\[
\widetilde{S}_{\text{whole}} = \frac{S_{\text{whole}} - \min(S_{\text{whole}})}{\max(S_{\text{whole}}) - \min(S_{\text{whole}})}.
\]

\subsection{Adaptive Mask Generation}

Using the importance prior $\widetilde{S}_{\text{whole}}$, we generate adaptive masks that obscure certain patches based on their importance. For the mask generation module, IMAGE proposes two mask generation strategies, corresponding to our adaptive mask and RF-GAM method respectively. The details are as follows: 

\paragraph{Threshold-Based Adaptive Masking} We sort the patches based on their importance scores and designate the top $\rho_i\%$ as important regions for each scale $i$. Within these important regions, we randomly select $\gamma\%$ of the patches to apply masking. For the remaining patches, we randomly mask patches to meet the desired masking ratio $\rho_i$. This strategy challenges the model to infer critical information from incomplete data while ensuring it has sufficient information to learn effectively.

\paragraph{Radiance Field Gaussian Adaptive Masking (RF-GAM)}

To implement a spatially aware masking strategy, we model the importance distribution using Gaussian radiance fields. For each feature map $F_i$, we select the top $K_i$ patches as radiation points based on their importance scores. For each radiation point $k$, we estimate its variance $\sigma_k^2$ by combining its feature vector $f_k$ with the cross-attention output $c_k$ and passing it through a feedforward network:

\begin{align*}
h_k &= [f_k, c_k], \\
\sigma_k^2 &= \text{ReLU}(\text{FFN}_\sigma(h_k)) + \epsilon,
\end{align*}

where $\epsilon$ ensures numerical stability. The radiance intensity at each location $(x, y)$ is computed as:

\[
I^{(b)}(x,y) = \sum_{k=1}^{K_i} \alpha_k^{(b)} \exp\left( -\frac{\| (x,y) - (x_k, y_k) \|^2}{2\sigma_k^{2(b)}} \right),
\]

where $\alpha_k^{(b)}$ is the amplitude (importance score) of radiation point $k$. We determine masking thresholds based on the intensity distribution's mean $\mu^{(b)}$ and standard deviation $\sigma^{(b)}$:

\begin{align*}
T_{\text{hard}}^{(b)} &= \mu^{(b)} + (\delta + k)\sigma^{(b)}, \\
T_{\text{no-mask}}^{(b)} &= \mu^{(b)} + (\delta - k)\sigma^{(b)},
\end{align*}

with hyperparameters $\delta$ and $k$. The mask $M_i^{(b,p)}$ is defined as:

\[
M_i^{(b,p)} = 
\begin{cases}
0, & \text{if } I^{(b)}(x,y) > T_{\text{hard}}^{(b)}, \\
1, & \text{if } I^{(b)}(x,y) < T_{\text{no-mask}}^{(b)}, \\
1 - \dfrac{I^{(b)}(x,y) - T_{\text{no-mask}}^{(b)}}{T_{\text{hard}}^{(b)} - T_{\text{no-mask}}^{(b)}}, & \text{otherwise}.
\end{cases}
\]

The final mask $M_i \in [0,1]^{B \times N_i}$ is applied to the feature map:

\[
F_i^{\prime} = F_i \odot \text{Reshape}(M_i),
\]

where $\odot$ denotes element-wise multiplication.

\paragraph{Progressive Masking Strategy}

To ensure effective learning, we introduce a progressive masking strategy that adjusts the masking ratio over the course of training:

\begin{itemize}
    \item \textbf{Multi-Scale Masking}: Apply different masking ratios at different feature map scales. Lower-level feature maps retain more detail, while higher-level maps have higher masking ratios to focus on reasoning.
    \item \textbf{Dynamic Masking}: Gradually increase the masking ratio and mask strength during training. The hyperparameter $k$ in RF-GAM is adjusted per epoch:
    \[
    k_{\text{epoch}} = k_0 \left(1 - \frac{\text{epoch}}{E_{\text{total}}}\right),
    \]
    where $k_0$ is the initial value, and $E_{\text{total}}$ is the total number of training epochs.
\end{itemize}

This progressive approach allows the model to adapt to increasing levels of difficulty, enhancing its ability to infer missing information and learn robust representations.

\paragraph{Optimization and Learning Strategy}

To optimize the model effectively, we employ the following learning strategies:

\begin{itemize}
    \item \textbf{Loss Functions}: We combine the standard contrastive loss used in vision-language alignment with the localization loss $L_{\text{localization}}$, weighted by $\beta$:
    \[
    L_{\text{total}} = L_{\text{contrastive}} + \beta L_{\text{localization}}.
    \]
    \item \textbf{Training Schedule}: We adopt an asymptotic learning schedule, gradually increasing the masking difficulty as the model becomes more capable.
    \item \textbf{Hyperparameter Tuning}: Parameters such as the initial masking ratio, the rate of increase, and the thresholds in RF-GAM are tuned to balance the trade-off between learning from sufficient information and challenging the model.
\end{itemize}

By integrating the adaptive masking technique with a carefully designed optimization strategy, IMAGE effectively enhances the model's ability to generalize from limited data without the need for scaling up dataset size.

\subsection{Theoretical Analysis}

We provide a theoretical justification for how adaptive masking improves performance, drawing parallels to the principles of generalization.

\begin{assumption}
The IAMGE model is trained on a dataset of image-text pairs $(x_i, t_i)$ drawn i.i.d. from an unknown joint distribution $\mathcal{D}$. Each image $x_i$ is encoded into a feature map $\mathcal{F}_i$, and the adaptive masking function generates a mask matrix $\mathcal{M}_i$ based on the importance prior learned from the feature map.
\end{assumption}

\begin{assumption}
The masking loss $L_{\text{mask}}$ is $L$-Lipschitz continuous with respect to the masked feature map $\mathcal{F}_i^{\prime} = \mathcal{F}_i \odot \mathcal{M}_i$, where $\odot$ denotes element-wise multiplication.
\end{assumption}

\begin{lemma}
Let $\hat{y}_{ij}$ be the predicted similarity between the masked image feature embedding and the corresponding text embedding in a batch. Let $y^*_{ij}$ be the optimal similarity that minimizes the IMAGE loss $L_{\text{IMAGE}}$. Then, with probability at least $1-\delta$, we have:
\[
|\hat{y}_{ij} - y^*_{ij}| \leq \frac{1}{\tau} \sqrt{\frac{\log(2/\delta)}{2N_{\text{batch}}}} + \frac{\beta L}{\tau},
\]
where $\tau$ is the temperature hyperparameter, $\beta$ is the masking loss weight, and $N_{\text{batch}}$ is the batch size.
\end{lemma}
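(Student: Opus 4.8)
The plan is to decompose the estimation error $|\hat{y}_{ij} - y^*_{ij}|$ by the triangle inequality into a statistical finite-batch fluctuation term and a systematic masking-bias term, so that the two summands on the right-hand side emerge separately. Introducing the population-level masked similarity $\bar{y}_{ij} := \mathbb{E}_{\mathcal{D}}[\hat{y}_{ij}]$, I would first write
\[
|\hat{y}_{ij} - y^*_{ij}| \le |\hat{y}_{ij} - \bar{y}_{ij}| + |\bar{y}_{ij} - y^*_{ij}|,
\]
and then bound the first term by a concentration inequality and the second by the Lipschitz assumption.

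For the concentration term I would invoke Assumption~1, namely that the batch consists of $N_{\text{batch}}$ image--text pairs drawn i.i.d.\ from $\mathcal{D}$. In the contrastive objective the predicted similarity $\hat{y}_{ij}$ is a temperature-scaled inner product of $\ell_2$-normalized embeddings, so after cosine normalization it takes values in a bounded interval whose width is governed by $1/\tau$. Applying Hoeffding's inequality to the batch average of these bounded scores gives, with probability at least $1-\delta$,
\[
|\hat{y}_{ij} - \bar{y}_{ij}| \le \frac{1}{\tau}\sqrt{\frac{\log(2/\delta)}{2N_{\text{batch}}}},
\]
which is exactly the first summand; the factor $1/\tau$ appears because temperature scaling rescales the admissible range of the logits.

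For the bias term I would use Assumption~2, that the masking loss is $L$-Lipschitz in the masked feature map $\mathcal{F}_i' = \mathcal{F}_i \odot \mathcal{M}_i$. Since the objective decomposes as $L_{\text{IMAGE}} = L_{\text{contrastive}} + \beta L_{\text{mask}}$, the masked optimum is a perturbation of the unmasked contrastive optimum driven by the added $\beta L_{\text{mask}}$ term. I would translate the Lipschitz bound on the loss into a bound on the induced displacement of the optimal similarity: the perturbation to the logit is at most $\beta L$ times the feature perturbation, and dividing by $\tau$ (to pass from logits back to similarities) yields the contribution $\frac{\beta L}{\tau}$. Adding the two bounds recovers the stated inequality.

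The hard part will be making this second step rigorous, because the Lipschitz hypothesis controls the loss \emph{value}, whereas the claim concerns the \emph{minimizer} $y^*_{ij}$; one must argue that a bounded change in the objective forces a correspondingly bounded change in its optimizer. This needs either a strong-convexity or smoothness assumption on $L_{\text{IMAGE}}$ near the optimum, or an implicit-function-style sensitivity argument relating optimizer displacement to the magnitude of the perturbing $\beta L_{\text{mask}}$ term; without such structure the identification of the perturbation size with $\beta L/\tau$ is the delicate point. A secondary subtlety is justifying the precise range $1/\tau$ entering Hoeffding, which depends on the embedding-normalization convention used in the contrastive loss and may shift the leading constant.
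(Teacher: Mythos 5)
Your proposal follows essentially the same route as the paper's proof: a triangle-inequality split about $\mathbb{E}[\hat{y}_{ij}]$, Hoeffding's inequality for the finite-batch fluctuation term, and the Lipschitz assumption on $L_{\text{mask}}$ for the $\beta L/\tau$ bias term. The two delicate points you flag are real but are not resolved in the paper either: its Hoeffding step uses the range $0 \leq \hat{y}_{ij} \leq 1$, which produces no $1/\tau$ factor at all, and its bias step merely asserts that the deviation is ``bounded by their total variation distance, scaled by the Lipschitz constant and the loss weight $\beta$,'' offering no sensitivity argument that passes from a bound on loss \emph{values} to a bound on the \emph{minimizer} $y^*_{ij}$.
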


\begin{proof}
The first term arises from Hoeffding's inequality, which bounds the deviation between the empirical mean $\hat{y}_{ij}$ and the true expectation $y^*_{ij}$ of the similarity between masked image features and text embeddings. Since $0 \leq \hat{y}_{ij} \leq 1$, the deviation is bounded by:
\[
\mathbb{P}\left(|\hat{y}_{ij} - \mathbb{E}[\hat{y}_{ij}]| \geq \epsilon\right) \leq 2 \exp\left(-2N_{\text{batch}}\epsilon^2\right).
\]
Solving for $\epsilon$ with probability $1 - \delta$ gives the first term.

The second term follows from the Lipschitz continuity of $L_{\text{mask}}$. By Assumption 2, for any two masked feature maps $\mathcal{F}_i^{\prime}$ and $\mathcal{F}_i^{\prime\prime}$, we have:
\[
|L_{\text{mask}}(\mathcal{F}_i^{\prime}) - L_{\text{mask}}(\mathcal{F}_i^{\prime\prime})| \leq L \|\mathcal{F}_i^{\prime} - \mathcal{F}_i^{\prime\prime}\|.
\]
The deviation between the masked features of $\hat{y}$ and $y^*$ is bounded by their total variation distance, scaled by the Lipschitz constant and the loss weight $\beta$. Combining both terms completes the proof.
\end{proof}

\begin{theorem}[\textbf{IMAGE Generalization Bound}]
Let $f_\theta$ denote the IMAGE model with learned parameters $\theta$. Let $\hat{R}(f_\theta)$ and $R(f_\theta)$ denote its empirical and expected risks, respectively, on a downstream task. Then, with probability at least $1 - \delta$ over the training set, we have:
\[
R(f_\theta) \leq \hat{R}(f_\theta) + \mathcal{O}\left(\frac{1}{\tau}\sqrt{\frac{\log(1/\delta)}{N_{\text{batch}}}} + \frac{\beta L}{\tau}\right).
\]
\end{theorem}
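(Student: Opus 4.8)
The plan is to decompose the generalization gap $R(f_\theta) - \hat{R}(f_\theta)$ into a statistical estimation error and a masking-induced bias, and then to control the first by Hoeffding's inequality (exactly as in the first term of the Lemma) and the second by the Lipschitz property of Assumption~2. In effect, the Theorem is the ``lifted'' version of the Lemma: the Lemma controls a single prediction $|\hat{y}_{ij}-y^*_{ij}|$, and the Theorem should follow by propagating that control through the (averaged) downstream loss.

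First I would write both risks as expectations or empirical averages of a per-example downstream loss $\ell$ evaluated at the predicted similarities, $R(f_\theta) = \mathbb{E}_{(x,t)\sim\mathcal{D}}[\ell(\hat{y})]$ and $\hat{R}(f_\theta) = \frac{1}{N_{\text{batch}}}\sum_{i}\ell(\hat{y}_{i})$. Since the contrastive objective passes the similarities through a softmax with temperature $\tau$, the loss $\ell$ is naturally $\tfrac{1}{\tau}$-Lipschitz in its similarity argument and bounded in $[0,1]$; I would record these two structural facts as the only properties of $\ell$ the argument uses, since they are exactly what produce the $1/\tau$ scaling in the final bound.

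Next I would split
\[
R(f_\theta) - \hat{R}(f_\theta) = \underbrace{\big(R(f_\theta) - \mathbb{E}[\hat{R}(f_\theta)]\big)}_{\text{bias}} + \underbrace{\big(\mathbb{E}[\hat{R}(f_\theta)] - \hat{R}(f_\theta)\big)}_{\text{fluctuation}}.
\]
The fluctuation term is an average of $N_{\text{batch}}$ bounded i.i.d.\ quantities (Assumption~1), so McDiarmid's inequality—equivalently Hoeffding's, as invoked in the Lemma—gives, with probability $1-\delta$, a deviation of order $\tfrac{1}{\tau}\sqrt{\log(1/\delta)/N_{\text{batch}}}$, matching the first term inside $\mathcal{O}(\cdot)$. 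The bias term is where the mask enters: combining the $\tfrac{1}{\tau}$-Lipschitzness of $\ell$ with the Lemma's control $|\hat{y}_{ij}-y^*_{ij}|\le \tfrac{1}{\tau}\sqrt{\log(2/\delta)/(2N_{\text{batch}})} + \beta L/\tau$, the systematic offset of each masked prediction from its optimum contributes the $\beta L/\tau$ term (the estimation part of the Lemma's bound being absorbed into the fluctuation term). Adding the two pieces and folding all constants and the $\log 2$ into $\mathcal{O}(\cdot)$ yields the stated inequality.

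The hardest part will be the transfer step that licenses using a training-time similarity bound to speak about the downstream risk. The statement tacitly assumes that downstream performance is a bounded, Lipschitz function of the learned similarities and that $y^*$, the minimizer of $L_{\text{IMAGE}}$, is also (near-)optimal for the downstream objective. Making this rigorous requires either an explicit assumption tying the downstream loss to the same temperature-$\tau$ softmax structure, or a short representation-transfer argument showing that masked features which are good for $L_{\text{IMAGE}}$ remain good downstream. Once this link is granted, everything else reduces to the routine concentration and Lipschitz estimates already assembled in the Lemma, so the bulk of the proof is bookkeeping rather than new technical content.
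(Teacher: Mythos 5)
Your proposal is sound at the same level of rigor as the paper's own argument, but it takes a genuinely different route. The paper treats the empirical risk directly as an average of pairwise similarities, applies Lemma~1 separately to each of the $O(N_{\text{batch}}^2)$ image--text pairs in the batch, and takes a union bound, arriving at
\[
|\hat{R}(f_\theta) - R(f_\theta)| \leq \frac{1}{\tau}\sqrt{\frac{\log(2N_{\text{batch}}^2/\delta)}{2N_{\text{batch}}}} + \frac{\beta L}{\tau},
\]
after which the extra $\log N_{\text{batch}}$ is absorbed into the $\mathcal{O}(\cdot)$. You instead decompose the gap into a fluctuation term (empirical average versus its expectation), controlled by a single application of Hoeffding/McDiarmid to the bounded averaged loss, plus a bias term that carries the masking offset $\beta L/\tau$ through the $\tfrac{1}{\tau}$-Lipschitzness of the loss. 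Your route buys two things: it avoids the union bound entirely, so the $\log(1/\delta)$ dependence in the theorem statement falls out directly rather than having to be ``simplified'' away; and it cleanly separates the statistical error from the systematic masking bias, which the paper's proof conflates (it bounds the deviation of each similarity from its optimum and then silently identifies that with the deviation of empirical from expected risk). You also explicitly name the transfer step --- that the downstream risk must be a bounded, $\tfrac{1}{\tau}$-Lipschitz functional of the learned similarities, with $y^*$ remaining near-optimal downstream --- which the paper assumes tacitly without comment; this is the real gap in the result, and flagging it is a strength of your write-up, not a defect. One caveat applies to both proofs equally: since $\theta$ is learned from the same data over which concentration is invoked, a fixed-hypothesis Hoeffding bound is strictly insufficient and would need a uniform-convergence or stability argument; neither you nor the paper closes this, so it does not count against you relative to the published proof.
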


\begin{proof}
The empirical risk $\hat{R}(f_\theta)$ is an average over the predicted similarities $\hat{y}_{ij}$ for image-text pairs in the masked feature space. By Lemma 1 and applying a union bound over all $O(N_{\text{batch}}^2)$ pairs, each term $\hat{y}_{ij}$ concentrates around the optimal $y^*_{ij}$ with high probability. The total deviation is bounded by:
\[
|\hat{R}(f_\theta) - R(f_\theta)| \leq \frac{1}{\tau} \sqrt{\frac{\log(2N_{\text{batch}}^2/\delta)}{2N_{\text{batch}}}} + \frac{\beta L}{\tau}.
\]
Simplifying the logarithmic term and constants yields the stated bound.
\end{proof}

\paragraph{Discussion.}

The theoretical analysis demonstrates that adaptive masking contributes to reducing the generalization error by forcing the model to learn robust representations from incomplete data. The generalization bound indicates that the error decreases with larger batch sizes $N_{\text{batch}}$ and appropriate choices of the temperature parameter $\tau$, masking loss weight $\beta$, and Lipschitz constant $L$. By adaptively masking key regions, the model is encouraged to develop stronger reasoning capabilities, which translates to improved performance in zero-shot and few-shot tasks.

\section{Experiments}
\subsection{Experimental Setups}

\paragraph{Dataset} 

We conduct experiments on the COCO and ODinW dataset. For training and evaluation in a close-set setting, we use the COCO 2017 dataset. The training set (train2017) contains approximately 118,000 images with 80 object categories, and the validation set (val2017) consists of about 5,000 images. To assess zero-shot detection capabilities, we utilize the ODinW datasets, specifically the ODinW\_13 and ODinW\_35 subsets. These datasets comprise images from various domains and contain object categories not present in the COCO training set, making them suitable for evaluating zero-shot performance. For few-shot experiments, we create subsets of the COCO train2017 dataset by randomly selecting 5\%, 10\%, 20\%, and 30\% of the data.

\paragraph{Evaluation Metrics} 
We assess the performance of the proposed method using the following metrics: (1) \textbf{Average Precision (AP)}: Following the standard COCO evaluation protocol, we report the Average Precision at Intersection-over-Union (IoU) thresholds ranging from 0.5 to 0.95, denoted as AP@[0.5:0.95]. (2) \textbf{Zero-Shot Detection}: For the ODinW datasets, we use mean Average Precision (mAP) as the primary metric to evaluate the model's zero-shot detection performance. (3) \textbf{Few-Shot Performance}: To assess generalization in few-shot settings, we report AP on the COCO val2017 set, evaluating the model's ability to learn from limited data.

\paragraph{Implementation Details} Our model is based on the Grounding DINO framework, incorporating a Swin-T backbone. The adaptive masking modules are integrated after the backbone's feature extraction stages, as described in Section 3. Different mask rates are applied to the four feature layers from the Swin-T backbone, with initial mask rates set to 20\%, 30\%, 40\%, and 50\%, respectively. In RF-GAM module, The parameter $k_0$ in the Gaussian modeling starts at 0.5 and decays smoothly to near zero over all epochs to facilitate progressive learning.

\subsection{Qualitative Results}

\paragraph{Overall Performance} 

To assess the generalization capabilities of IMAGE, we compare their performance in zero-shot, close-set, and few-shot settings under the same number of training epochs. The experiment shows great improvement of our method in low-shot and close-set grounding tasks, As shown in Fig.~\ref{fig:scaling_laws}. In particular, we also discuss the final results of these methods, as shown in table~\ref{tab:comparison_results}, where IMAGE still exhibits excellent performance.

\begin{table}[htbp]
\centering
\begin{tabular}{|c|c|c|c|c|c|}
\hline
\text{Datasets}   & \text{Metric}   & \text{Baseline} & \text{Random Mask} & \text{Adaptive Mask} & \text{RF-GAM} \\ \hline
\text{Close-set}  & COCO val2017      & 0.454             & 0.456                & 0.473                  & \textbf{0.481 (+2.7\%)}             \\ \hline
\multirow{2}{*}{\text{Zero-shot}} & ODinW\_13         & 0.208             & 0.190                & 0.235                  & \textbf{0.251 (+4.3\%)}             \\ \cline{2-6} 
                    & ODinW\_35         & 0.092             & 0.085                & 0.104                  & \textbf{0.112 (+2.0\%)}             \\ \hline
\text{Few-shot}   & COCO val2017      & 0.400             & 0.392                & 0.426                  & \textbf{0.437 (+3.7\%)}             \\ \hline
\end{tabular}
\caption{Performance comparison across different datasets with percentage improvement in IMAGE in low-shot setting.}
\label{tab:comparison_results}
\end{table}

\begin{figure*}[ht]
   \centering
   \includegraphics[width=\linewidth]{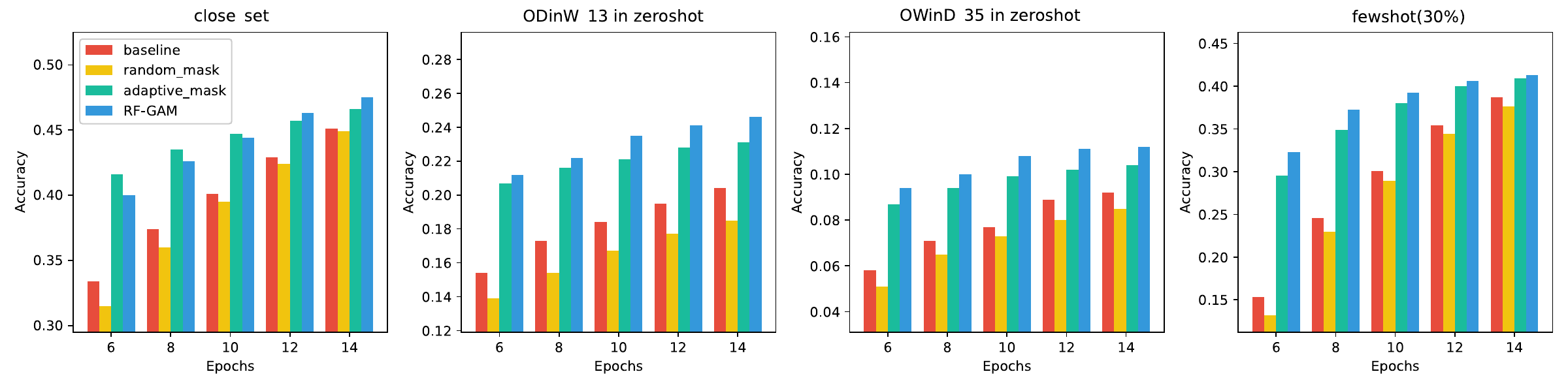} 
   \caption{Scaling laws of our IMAGE model. With increased epochs, IMAGE achieves more accurate grounding AP across all four datasets and three settings.}
   \label{fig:scaling_laws}
\end{figure*}

In the close-set scenario, after training on the full COCO \textit{train2017} dataset for 10 epochs, the RF-GAM model achieves an AP of 44.1\% on the COCO \textit{val2017} dataset, while the baseline model got 42.2\% AP. In the few-shot scenario with 30\% of the training data and 6 epochs, the model achieves an AP of 32.3\%, which is close to the performance achieved by the baseline trained with the full dataset and outperforms the baseline model about 17\% AP with 30\% of the training data. This highlights the efficiency of our method in low-data regimes and extraordinary robust representation learning .

For zero-shot evaluation, we test our models on the ODinW datasets, which contain categories not seen during training. As presented in Table~\ref{tab:comparison_results}, the RF-GAM model achieves an average AP of 25.1\% on the ODinW\_13 dataset, outperforming the baseline and random masking methods about 5\% AP. This indicates that our adaptive masking strategies greatly enhance the model's ability to generalize to unseen categories, paving for the meta-learning in a new way.

\paragraph{Few-Shot Training with Different Data Ratios} We evaluate our models in few-shot learning scenarios by training them on varying proportions (5\%, 10\%, 20\%, and 30\%) of the COCO \textit{train2017} dataset and testing on the COCO \textit{val2017} dataset. This setup simulates situations with limited annotated data.

As shown in Fig.~\ref{fig:fewshot}, our adaptive masking methods significantly improve performance in few-shot settings. For instance, with only 30\% of the training data and after 6 epochs, the RF-GAM model achieves an AP of 32.3\%, compared to 15.3\% for the baseline model under the same conditions. In addition, RF-GAM with only 30\% of the training data is already comparable in accuracy to the baseline with 100\%. These demonstrates that RF-GAM enchants the model with incredible generalization ability and be able to learn effectively from limited data by focusing on critical features.

\begin{figure*}[ht]
   \centering
   \includegraphics[width=\linewidth]{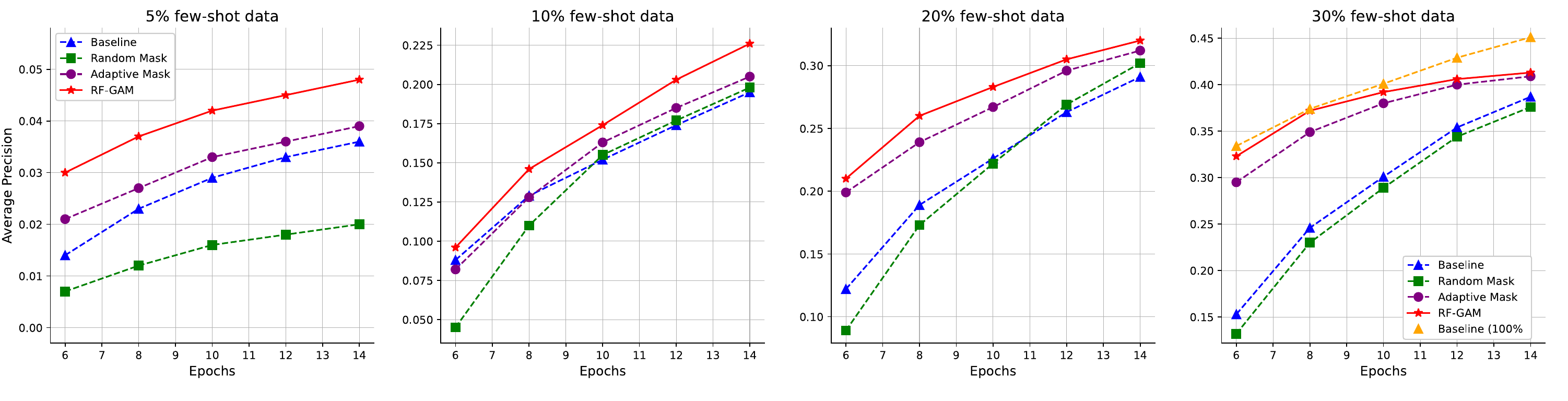} 
   \caption{Comparison between IMAGE with other strategies in different few-shot ratios}
   \label{fig:fewshot}
\end{figure*}

\paragraph{Impact of Different Mask Ratios} To investigate the effect of different mask ratios on model performance, we experimented with various initial mask rates applied to different feature layers. The mask ratios tested include [10\%, 20\%, 30\%, 40\%], [20\%, 30\%, 40\%, 50\%], and [30\%, 40\%, 50\%, 60\%].

The results in Table~\ref{tab:scale_masking} indicate that the initial mask ratio of [20\%, 30\%, 40\%, 50\%] yields the best performance on most datasets, achieving the highest AP of 0.481 on the COCO \textit{val2017} dataset, 0.112 on the ODinW\_35 dataset, and 0.437 on the fewshot(30\%) dataset. On the ODinW\_13 dataset, the [30\%, 40\%, 50\%, 60\%] mask ratio gives the best performance with an AP of 0.253. These results suggest that a balanced masking strategy across feature layers generally enhances feature learning and overall model performance, but the optimal ratio may vary slightly depending on the dataset.

\begin{wrapfigure}{r}{0.5\textwidth}
   \centering
   \includegraphics[width=\linewidth]{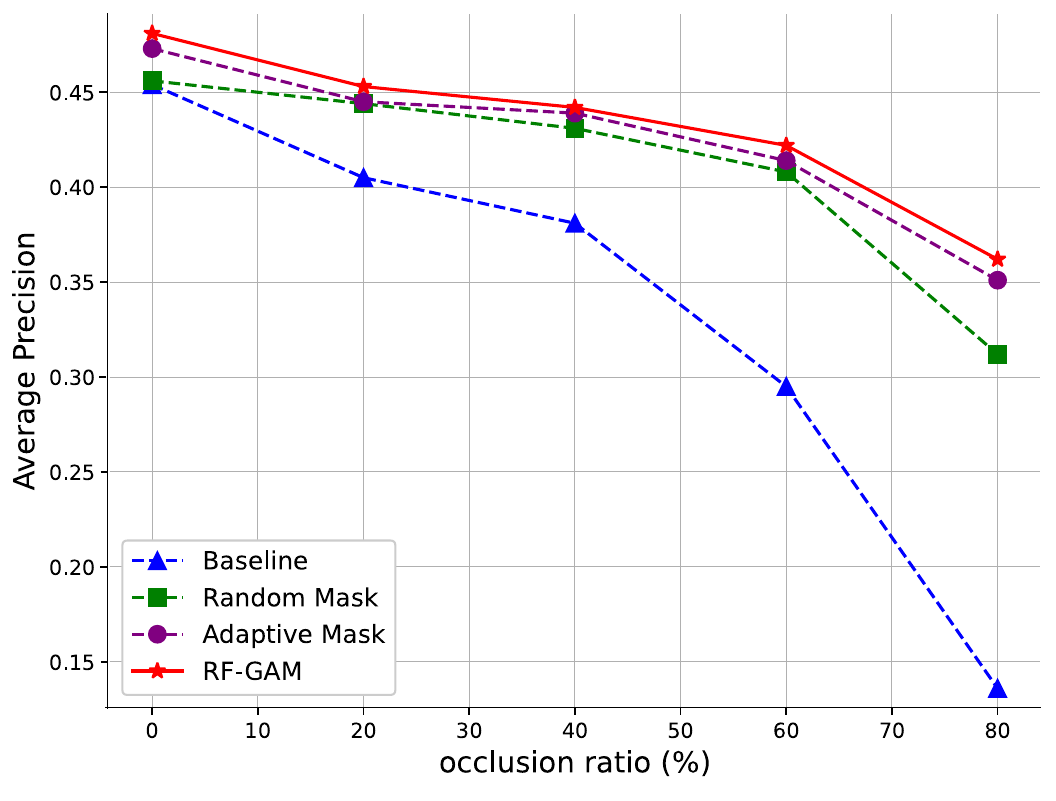} 
   \caption{Results in different occlusion ratios on images across various methods.}
   \label{fig:occlusions}
\end{wrapfigure}

\begin{table}[htbp]
\centering
\begin{tabular}{|c|c|c|c|c|}
\hline
\text{Scale\_masking\_ratio} & \text{COCO val2017} & \text{ODinW\_13} & \text{ODinW\_35} & \text{fewshot(30\%)} \\ \hline
\text{[10\%,20\%,30\%,40\%]} & 0.479 & 0.248 & 0.109 & 0.431 \\ \hline
\text{[20\%,30\%,40\%,50\%]} & \textbf{0.481} & 0.251 & \textbf{0.112} & \textbf{0.437} \\ \hline
\text{[30\%,40\%,50\%,60\%]} & 0.470 & \textbf{0.253} & 0.111 & 0.424 \\ \hline
\end{tabular}
\caption{IMAGE Results Across Different Datasets and Scale-masking Combinations}
\label{tab:scale_masking}
\end{table}

\paragraph{The Performance of Methods in Different Occlusion Ratios}
In the study of object grounding accuracy under partial occlusion, we compared our RF-GAM model, which utilizes adaptive masking and Gaussian dynamic modeling strategies, against a baseline model, random masking, and adaptive masking methods. AS shown in Fig.~\ref{fig:occlusions}, as the occlusion rate increases from 0\% to 80\%, all models experience a decline in accuracy. However, RF-GAM consistently achieves the highest accuracy, particularly under higher occlusion rates, where its superiority becomes more evident. Even with 80\% occlusion, RF-GAM still achieved \textbf{0.362} AP, far surpassing the baseline model \textbf{(0.136)} and outperforming both random and adaptive masking methods. This superiority can be attributed to the fact that the adaptive masking strategy based on Gaussian dynamic modeling in RF-GAM empowers the model to reason robustly from the residual image information.


\subsection{Ablation Studies}
\paragraph{Effectiveness of Importance Prior in Adaptive Masking} To assess the impact of incorporating importance priors in our adaptive masking strategy, we compare our model using adaptive masking strategy with a baseline and random masking where the masking is applied uniformly at random without considering patch importance. In random masking, patches are masked without regard to their significance in the feature map.

As shown in Table~\ref{tab:comparison_results}, the model utilizing the importance prior in adaptive masking methods demonstrates superior performance across all evaluation settings. Specifically, in the close-set scenario on COCO \textit{val2017}, the model with importance prior achieves an AP of 47.3\%, compared to 45.6\%, 45.4\%  for the random masking and baseline respectively. In the zero-shot evaluation on the ODinW\_13 dataset, the importance prior model attains an average AP of 23.5\%, surpassing the baseline's 20.5\% and 19\% in random masking. For few-shot learning with 30\% of the training data, the importance prior model achieves an AP of 42.6\%, consistently outperforming the baseline's 40.0\% and 39.2\% in random masking. These results confirm that incorporating patch importance into the masking strategy effectively enhances feature learning by focusing on critical regions, leading to improved detection performance.

\paragraph{Effectiveness of Gaussian Radiance Field Modeling} We evaluate the contribution of RF-GAM by comparing it with the standard adaptive masking method that does not use radiance field modeling. The standard adaptive masking applies masking based on patch importance but without modeling the spatial distribution of importance using Gaussian functions.

As presented in Table~\ref{tab:comparison_results}, the RF-GAM method consistently outperforms the standard adaptive masking method across all scenarios. In the close-set evaluation on COCO \textit{val2017}, RF-GAM achieves an AP of 44.1\%, compared to 43.7\% for the standard adaptive mask. In zero-shot detection on ODinW\_13, RF-GAM attains an average AP of 25.1\%, exceeding the standard method's 23.5\%. In the few-shot setting with 30\% training data and 6 epochs, RF-GAM achieves an AP of 32.3\%, higher than the standard adaptive mask's 29.0\%. These improvements indicate that modeling the importance distribution using Gaussian radiance fields allows for more nuanced and effective masking, enhancing the model's ability to learn salient features.

\begin{table}[h]
\centering
\renewcommand{\arraystretch}{1.2}
\begin{tabular}{|l|c|c|c|}
    \hline
    \text{Settings} & \text{Close-set (COCO)} & \text{Few-shot (COCO)} & \text{Zero-shot (ODinW\_13/35)} \\ \hline
    \text{non-progressive} & 0.476 & 0.426 & 0.235 / 0.110 \\ \hline
    \text{progressive}     & \textbf{0.481} & \textbf{0.437} & \textbf{0.251 / 0.112} \\ \hline
\end{tabular}
\caption{The comparison of non-progressive and progressive training across datasets.}
\label{tab:progress}
\end{table}

\paragraph{Effectiveness of Progressive Training Strategy} To determine the impact of the progressive training strategy, we conduct experiments where the parameter $k$ in the RF-GAM method is held constant, effectively removing the progressive aspect. In the standard RF-GAM, $k$ starts at an initial value (e.g., 0.5) and decays to near zero over the training epochs to facilitate gradual learning. By fixing $k$, we assess whether the progressive adjustment contributes to performance gains.

The results in Table~\ref{tab:progress} reveal that the progressive training strategy significantly enhances model performance. Without progressive $k$ decay, the IMAGE model achieves an AP of 47.6\% on COCO \textit{val2017}, which is slightly lower than the 48.1\% achieved with the progressive strategy. Similarly, in zero-shot detection on ODinW\_13, the non-progressive model attains an average AP of 23.5\%, compared to 25.1\% with progressive training. In the few-shot scenario with 30\% data, the non-progressive model achieves an AP of 42.6\%, lower than the 43.7\% with progressive $k$ decay. These results suggest that gradually reducing $k$ during training helps the model adaptively adjust the masking intensity, promoting better feature learning and generalization.

\section{Conclusion}

In this paper, we introduced \textbf{IMAGE} (\textbf{I}nterpretative \textbf{MA}sking with \textbf{G}aussian Radiation Mod\textbf{E}ling), a novel approach designed to enhance zero-shot and few-shot visual grounding without the need for enlarging dataset sizes. Inspired by cognitive science and the success of Masked Autoencoders (MAE), our method employs adaptive masking on salient regions of the feature maps generated by the vision backbone, compelling the model to reconstruct occluded information and thereby learn robust, generalized representations that effectively attend to both local and global features. Evaluated on benchmark datasets including COCO and ODinW, IMAGE consistently outperforms baseline models, demonstrating superior performance in zero-shot and few-shot tasks. These findings underscore the potential of adaptive feature manipulation through attention mechanisms and Gaussian modeling as a promising alternative to methods relying on dataset scaling for advancing low-shot learning capabilities.

The challenges posed by complex real-world visual data, such as severe occlusions and missing key object features, present opportunities to further enhance our approach. Future work could focus on integrating more sophisticated data augmentation techniques or incorporating multimodal data—such as depth information or temporal cues—to improve the model's ability to generalize from incomplete visual inputs. Additionally, applying our adaptive masking strategy to other areas like video understanding or 3D vision may extend its benefits. A deeper investigation into the interplay between adaptive masking, attention mechanisms, and Gaussian modeling may provide valuable insights, potentially leading to further advancements in zero-shot and few-shot learning across various domains.


\bibliography{iclr2025_conference}
\bibliographystyle{iclr2025_conference}


\end{document}